\newcommand{\agent}{SARL\xspace}
\newtheorem*{theorem}{Theorem}
\begin{document}
%
\title{A Socially Aware Reinforcement Learning Agent for The Single Track Road Problem}

\author{\IEEEauthorblockN{Ido Shapira and Amos Azaria}\\
\IEEEauthorblockA{Computer Science Department\\
Ariel University
}
}

\maketitle

\begin{abstract}
We present the single track road problem. In this problem two agents face each-other at opposite positions of a road that can only have one agent pass at a time. We focus on the scenario in which one agent is human, while the other is an autonomous agent. We run experiments with human subjects in a simple grid domain, which simulates the single track road problem.
We show that when data is limited,  building an accurate human model is very challenging, and that a reinforcement learning agent, which is based on this data, does not perform well in practice. However, we show that an agent that tries to maximize a linear combination of the human's utility and its own utility, achieves a high score, and significantly outperforms other baselines, including an agent that tries to maximize only its own utility. 
\end{abstract}


%
\IEEEpeerreviewmaketitle

\section{Introduction}

While humans can cope with new situations quite easily, even state-of-the-art algorithms struggle with new situations that they haven't been trained on. Unfortunately, when it comes to autonomous vehicles the results may be devastating. One example for an uncommon, yet important scenario for autonomous vehicles is the problem of a single track road. In this problem two vehicles in opposite directions must cross a narrow road, which is not wide enough to allow both vehicles to pass at the same time. Therefore, one vehicle must deviate from the road and let the other vehicle cross. However, if both vehicles deviate to the margins, they might both return to the road, and may either end-up deviating to the margins again or even colliding with each-other.
Despite only a small portion of the roads being single track roads, autonomous vehicles must be able to function properly in these types of roads. Furthermore, some more common situations resemble the single track road problem, for example, if cars park where they shouldn't and block one of the lanes or if one lane is blocked for any other reason (e.g., a falling tree), the traffic in both ways must operate with a single lane.

In this paper we model the single track road problem as a sequential two player game on a two row grid (see Figure \ref{fig:game}). The upper row represents a road that allows both players to advance. However, the lower row can only be used for allowing the other player to pass, as the players cannot advance when placed in the lower row.
We find several equilibria of the game, which should determine how a perfectly rational agent should behave in such a game.
However, people tend to deviate from what is considered rational behavior, since they are influenced by different effects including anchoring, inconsistency of utility and a lack of understanding of other agent's behavior~\cite{tversky81,ArielyAnchor,camerer03}. Indeed, as we later show, while some people tend to follow the game theoretic solution, many others do not follow it, and behave unexpectedly.
Due to non-perfectly rational behavior of humans, algorithmic approaches that assume rational behavior tend to perform poorly with humans~\cite{bitan2013social,azaria2015strategic,nay2016predicting}.

\begin{figure}[t]
\label{fig:game}
\centering
\includegraphics[width=4in]{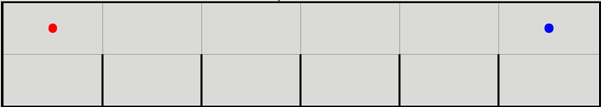}
\caption{The initial state of the single road game board. The red circle is controlled by the human player and the blue circle is controlled by the autonomous agent. Both players must reach the opposite side of the board without colliding.
The players may travel freely on the upper row, but they cannot advance when located on the lower row.}
\end{figure}

Therefore, a common approach for developing an agent that can proficiently interact with humans is composed of several stages \cite{azaria2011giving,nguyen2013analyzing,rosenfeld2015adaptive}. The first stage includes the collection of a data-set of humans interacting in the environment. Next, based on the collected data-set a human behavior model is developed, usually by applying machine-learning techniques. Finally, the human model is used by the agent to determine the actions that are the most beneficial for it.  
In this paper we attempt to follow this common practice for the single track road game. Therefore, we collect human data in this game and use it to compose a human model. Then, we model the agent's problem as a reinforcement learning environment by an MDP with the human model being a part of the environment. Finally, we use value iteration, a dynamic programming based method, to find the supposedly optimal action for the agent. We note that the solution provided by value iteration is guaranteed to be optimal under the assumption that the MDP models the environment perfectly, which includes the human model. 

However, composing a human behavior model based on a relatively small data-set may be inaccurate, as people are many times unpredictable and different humans tend to behave very differently from one another, despite a game being relatively simple \cite{shvartzon2016personalized,azaria2016autonomous}. Therefore, we introduce a novel method for solving an MDP that is based on a non-perfect human behavior model. Namely, instead of solving the MDP with an attempt to maximize the agent's outcome, we propose to maximize a linear combination of the agent's outcome and the human's outcome. We expect that optimizing toward a linear combination will be beneficial for the agent, since the humans are likely to try and optimize their own utility function, so they are likely to deviate from the human model in a way that will indeed maximize their utility function. By optimizing toward a linear combination, the agent acts as if it already accounts for these deviations and is therefore more likely to adapt to them.
We provide a formula for determining the proposed linear combination, which is based on the similarity of the agents' utility functions.

We introduce our Socially Aware Reinforcement Learning agent (\agent), an agent that attempts to maximize the linear combination of the two utility functions, using our proposed formula. We show that \agent significantly outperforms all other baselines in the single track road game when interacting with humans, in terms of the agent's final outcome. Somewhat less surprisingly, humans interacting with \agent also achieve the highest outcome. Therefore, \agent not only perform better with respect to its own outcome, but also with respect to social welfare.

To summarize, the contributions of this paper are three-fold:
\begin{enumerate}
    \item We present the single track road problem, model it as a sequential game, and present the equilibria of the game.
    We show that people do not follow strategies that are in equilibrium.
    \item We model the problem as an MDP in which the human's actions are modeled as a part of the environment. The model uses data from humans interacting with simple agents to determine the probability of the human taking each action at a given state. 
    \item We present \agent, a socially aware reinforcement learner, that uses a linear combination of the rewards of both agents. We provide a formula for finding the parameter to be used in this linear combination. Finally, we show that our method significantly outperforms all other baselines.
\end{enumerate}

\section{Related Work}
Trajectory prediction of surrounding vehicles and pedestrians is very important for the development of autonomous vehicles, as such knowledge can prevent accidents. Indeed, trajectory prediction is challenging due to the unexpected nature of human behavior.
Therefore, many works attempt to find a sufficient solution to overcome this challenge \cite{leon2021review}.

According to Houenou et al. \cite{AP} trajectory prediction can be based on a deterministic method that selects the current maneuver from a predefined set using kinematic measurements and road geometry detection.
The authors state that their model cannot be applied to very low speed scenarios and therefore is not applicable to our scenario.

Deo and Trivedi \cite{DT} estimate a probability distribution of future positions of a vehicle conditioned on its track history and the track histories of vehicles around it, at a certain time. Using this information, they select one of six possible maneuvers that have been defined.
They use the publicly available NGSIM US-101
and I-80
highway data-sets for their experiments. Their model relies purely on vehicle tracks to infer maneuver classes and ignores the lanes and the map.

Ding at al. \cite{DC} use a {\it recurrent neural network} for composing an observation encoding. Based on this encoding, they propose a {\it Vehicle Behavior Interaction Network} (VBIN) to capture the social effect of another agent on the prediction target, based on their maneuver features and relative dynamics (e.g., relative positions and velocities).
VBIN is an end-to-end trainable framework and is suitable for dynamic driving scenarios where the dynamics of the agents affect their importance in social interactions. They use data collected from highways US-101 and I-80 as used in other work \cite{DT}; since it deals only with highway roads with a large number of agents, it is not applicable for our setting.

Kim et al. \cite{KM} propose a deep learning approach for trajectory prediction based on a {\it Long Short Term Memory} (LSTM). Their model is used to analyze the temporal behavior and predict the future coordinates of the surrounding vehicles. Based on the coordinates and velocities of the surrounding vehicles, the vehicle’s future location is produced after a certain short amount of time.
However, the experiments were conducted using data collected from highway driving, which is again not suitable to our case.

Chandra at al. \cite{CG} present an approach for trajectory prediction in urban traffic scenarios using a two-stream graph-LSTM network. The first stream predicts the trajectories, while the second stream predicts the behavior (i.e. overspeeding, underspeeding, or neutral) of road-agents.
It is based on the vehicle coordinates and a weighted dynamic geometric graph (DGG) that represents the relative proximity among road agents.
They also present a rule-based behavior prediction algorithm to forecast whether a road agent is overspeeding
(aggressive), underspeeding (conservative), or neutral,
based on the traffic behavior classification from the psychology literature. They evaluate their approach on the Argoverse, Lyft, Apolloscape, and NGSIM datasets and highlight the benefits over prior trajectory prediction methods.

Elhenawy at al. \cite{EM} introduce a real time game-theory-based algorithm that is inspired by the {\it chicken-game} for controlling autonomous vehicle movements at uncontrolled intersections. They assume that all vehicles communicate to a central management center in the intersection to report their speed, location and direction. The intersection management center uses the information from all vehicles approaching the intersection and decides which action each vehicle will take. They further assume that vehicles obey the {\it Nash-equilibrium} solution of the game and will take the action received from the management center. Unfortunately, these assumptions are very strong and cannot be applied to our setting.

Camara at al. \cite{CF} suggest a more realistic game-theory model based on the {\it sequential chicken-game}.
The model assumes both agents share the same parameters $U_{crash}$ and $U_{time}$, both know this is the case, and both play optimally from their state. It assumes that no lateral motion is permitted, and that there is no communication between the agents other than seeing each other’s positions. 
The sequential chicken-game can be viewed as a sequence of one-shot (sub-)games, which can be solved similarly.
The sub-game at time $t$ can be written as a standard game theory matrix, which can be solved using recursion, game theory, and equilibrium selection to give values and optimal strategies at every state. While they handle the case of a junction by finding a Nash equilibrium and assuming that humans obey it, we deal with the single track road and give not only a game-theory analysis but also provide a novel Reinforcement Learning solution that does not require assumptions about humans and Nash equilibria.




There have been several previous works attempting to model human behavior in normal form games \cite{wright2010beyond,wright2014level}. Wright and Leyton-Brown \cite{wright2010beyond} collected the results of multiple experiments from normal form games studied in the literature, and showed how the human action distribution can be modeled with high accuracy.
However, our problem is clearly more complex and cannot be modeled as a simple normal form game.

Azaria el al. \cite{azaria2012strategic,azaria2016strategic} introduce SAP, a social agent for advice provision. They show that humans tend to ignore advice provided by a selfish agent. Therefore, they suggest using some linear combination of the user's and the agent's preferences. The exact ratio is determined by simulating human behavior and selecting the ratio that achieve the highest performance for the agent in simulation. Therefore, both SAP and our work attempt to maximize agent performance and consider a linear combination of both the user and the agent, however, the environment and settings are completely different, as SAP is an agent for advice provision, and we use a grid environment. In addition, the purpose of the linear combination used by SAP is to address the issue of human trust, while in our work, it is used to mitigate the uncertainty we have in our human model.
Furthermore, we propose a formula for obtaining our proposed ratio, rather than running a simulation for obtaining that value.

\section{The Single Track Road Game}
We now provide a formal definition for the single track road game, which is the main focus of this paper.
Two agents $A$ and $B$ are placed on a $2 \times n$ grid at both ends on the upper row, where agent $A$ is positioned at the upper right corner, with coordinates $(1,n)$, and agent $B$ is positioned at the upper left corner, with coordinates $(1,1)$, each agent's goal is to maximize $u(W)$, their future outcome where $W$ refers to the agent. Each agent's goal is to reach the other side in a minimal number of steps, and without colliding with the other agent.
The set of actions available for each agent depends on its location. In the upper row each agent can perform the following actions: 
\begin{itemize}
    \item {\it Advance}: move toward the other side.
    \item {\it Stay}: remain in current position.
    \item {\it Down}, move to the bottom row.
\end{itemize}
In the bottom row each agent can perform one of the following actions:
\begin{itemize}
    \item {\it Stay}: remain in current position.
    \item {\it Up}: return to the top row.
\end{itemize}
Both agents take actions synchronously, and do not observe the other's action before they take their own action.
We define the reward function as follows:
\begin{itemize}
    \item {\it Collusion}: if both agents collide, each agent loses 100 points, and the game ends.
    \item {\it Arrived at destination}: an agent that arrives at its destination receives a reward of 30 points. The game ends only for the agent that has reached its destination, i.e., the second agent continues to play until it reaches its destination, in which case it will receive a reward of 30 points as well.
    \item {\it Time loss}: any agent that is still in the game (did not reach its destination or collided with the other agent) loses 1 point each time-step.
\end{itemize}

\section{Game Theoretical Analysis}
\label{sec:gameTheory}
In this section we present the game-theory analysis for the single track road problem. Let $x(W)$ be the $x$ coordinate (column) of agent $W$ and let $y(W)$ bet its $y$ coordinate (row).
Let $d(A,B) = x(A) - x(B)$. Note that if agent $B$ has passed agent $A$, $d(A,B)$ will be negative. 



\begin{theorem}
\label{thr:eq}
For two agents $A,B$ in the $2 \times n$ grid of the single track road game. The following strategies are in a sub-game perfect Nash equilibrium:
\begin{itemize}
\item Agent $A$ uses the following strategy:
\begin{itemize}
    \item If $y(A)=1$ (it is in the upper row) it takes action {\it Advance}.
    \item If $y(A)=2$ (it is in the lower row) it takes action {\it Up}.
\end{itemize}
\item Agent $B$ uses the following strategy:
\begin{itemize}
\item If $y(B)=1$ (the agent is in the upper row):
    \begin{itemize}
        \item If $d(A,B) \geq 3$ or $d(A,B) < 0$, it takes action {\it Advance}.
        \item If $y(A)=1$ and $d(A,B) = 1$ it takes action {\it Down}.
        \item If $y(A)=1$ and $d(A,B) = 2$, it may either take action {\it Stay} or {\it Down} (or any mixed strategy of the two).
    \end{itemize}    
\item If $y(B)=2$ (the agent is in the lower row):
    \begin{itemize}
        \item If $d(A,B) \leq 0$ it takes action {\it Up}.
        \item If $y(A)=1$ and $d(A,B) = 1$ it takes action {\it Stay}.
        \item If $y(A)=1$ and $d(A,B) \geq 4$ it takes action {\it Up}.
        \item If $y(A)=2$ and $d(A,B) \geq 3$ it takes action {\it Up}.
        \item Otherwise, it may either take action {\it Stay} or {\it Up} (or any mixed strategy of the two).
    \end{itemize}
\end{itemize}
\end{itemize}
\end{theorem}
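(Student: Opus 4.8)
The plan is to establish the equilibrium through backward induction, reduced to the \emph{one-shot deviation principle}. First I would argue that this principle applies here: although players may in principle \textit{Stay} forever, every non-terminal step costs at least one point while all rewards are bounded, so any strategy that fails to terminate is strictly dominated; moreover, under the prescribed profile (and after any single deviation followed by reversion) the game ends within a number of steps bounded by a constant plus $2n$. Hence it suffices to verify that at \emph{every} state --- reachable or not --- no player gains by deviating at that single node and then reverting to the prescribed strategy. Because the two agents move simultaneously, each ``subgame'' is itself a one-shot simultaneous-move game whose payoffs are the immediate reward plus the continuation value under the prescribed profile; I would therefore check that the prescribed action pair is a Nash equilibrium of each such stage game, i.e.\ a mutual best response, rather than only individually attractive.

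The second step is to pin down the continuation values. Since agent $A$'s strategy is to \textit{Advance} whenever it is on the upper row (and to return \textit{Up} otherwise), its trajectory is deterministic and it reaches its destination in a fixed number of steps once $B$ is out of its way; given $B$'s yielding behaviour, $A$ never collides. I would then characterize $B$'s time-to-exit as a function of the state $(y(A),y(B),d(A,B))$, and in particular establish the central quantitative fact that the detour forced on $B$ by a single yield costs only a bounded (constant) number of extra steps, independent of $n$. Combined with the collision penalty of $-100$, this shows that avoiding a collision is always worth the detour, so any action leading to an unavoidable collision is strictly dominated and is never a best response.

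With these tools in place I would run the case analysis by grouping states according to the partition used in the statement. For $B$ on the upper row the key observations are: when $d\ge 3$ or $d<0$, advancing neither risks a collision on the next step (there $d$ falls by $2$ to a still-safe value, or $B$ is already past $A$) nor wastes time, so it is uniquely optimal; when $d=1$ with $A$ above, both \textit{Stay} and \textit{Advance} collide with $A$'s advance, leaving \textit{Down} as the only collision-free, hence best, response; and when $d=2$ with $A$ above, I would show that \textit{Stay} and \textit{Down} lead to the \emph{same} continuation state exactly two steps later (both reach the configuration with $A$ directly above $B$ and $d=0$), which simultaneously establishes their optimality over \textit{Advance} and justifies the claimed indifference between them and any mixture. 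The lower-row cases follow the same ``when to resurface'' logic: $B$ should go \textit{Up} precisely when it can thereafter make progress without a forced collision --- already past $A$ ($d\le 0$), or far enough to advance safely on the following step ($d\ge 4$ when $A$ is above, $d\ge 3$ when $A$ is below, the thresholds coming from the requirement that $d$ after surfacing be at least $3$) --- must \textit{Stay} when surfacing would collide with $A$'s advance ($d=1$), and is indifferent in the intermediate band, which is exactly the source of the mixed-strategy clause. Finally, $A$'s action is verified to be a best response at every node: on path, advancing is the minimum-time route while $B$ guarantees safety, and in the off-path subgames where $A$ sits on the lower row, returning \textit{Up} is optimal for the same resume-progress-safely reason.

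The main obstacle I anticipate is the off-equilibrium bookkeeping together with the simultaneity of moves. Getting the thresholds exactly right --- why $d\ge 4$ rather than $d\ge 3$ in the upper-versus-lower comparison, and why the boundary cases collapse into genuine indifference --- requires carefully tracking that $d$ changes by $1$ or $2$ per step depending on which agent moves, and confirming at each contested node that the prescribed pair is a \emph{mutual} best response. The most delicate subgames are those in which $A$ has been pushed off its deterministic path, since $A$'s own continuation value must then be recomputed; I would treat these last, exploiting the fact that $A$'s strategy restores it to the upper row in a single step and that $B$'s responses are constructed to keep the pair collision-free throughout that recovery.
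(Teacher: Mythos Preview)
Your proposal is correct and, at its core, follows the same case-by-case no-profitable-deviation check the paper performs: fix one agent's strategy, walk through each clause of the other's, and argue that the prescribed action is at least as good as any alternative.

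Where you differ is in rigor and framing. The paper simply asserts, agent by agent and case by case, that deviating either delays arrival or causes a collision; it does not justify why this per-node check suffices for subgame perfection, does not treat the simultaneous-move structure explicitly, and does not compute continuation values to support the indifference claims. Your route via the one-shot deviation principle, the termination argument for an a priori unbounded horizon, and the explicit two-step confluence showing that \textit{Stay} and \textit{Down} at $d=2$ reach the same continuation state are genuine additions that the paper omits. Likewise, your derivation of the $d\ge 4$ versus $d\ge 3$ thresholds from the post-surfacing distance requirement makes precise what the paper only asserts. What the paper's approach buys is brevity; what yours buys is an actual proof that the stated profile is subgame perfect rather than merely a Nash equilibrium, together with a clean explanation of why the mixed-strategy clauses are not arbitrary.
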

\begin{proof}
The proof handles each of the agents separately and shows that no agent should deviate from its determined strategy under the assumption that the other agent remains with its strategy. This is true also for any sub-game.
Given agent $B$'s strategy, agent $A$ should not deviate, as deviation will either cause it longer to reach its destination (resulting in a lower reward), or to collide with agent $B$ (if it decides to take action {\it Down} when agent $B$ is directly below it), resulting in a much lower reward.
Similarly, given agent $A$'s strategy, agent $B$ should not deviate, due to the following:
\begin{itemize}
    \item If $y(B) = 1$ (the agent is in the upper row):
        \begin{itemize}
            \item If $y(A) = 1$ and $d(A,B) = 1$, under the assumption that $A$ would {\it Advance}, taking an action other than {\it Down} would lead to a collision, which will result in a very low reward.
            \item If $d(A,B) \geq 3$ or $d(A,B) < 0$, so either agent $A$ is very far or it has already passed agent $B$. Therefore, there is no risk of collision, and deviating and taking action {\it Down} or {\it Stay} will result in arriving later at the destination, which will result at a lower reward.
            \item If $y(A) = 1$ and $d(A,B) = 2$, deviating and taking action {\it Advance} would result in a collision. Therefore, agent $B$ should take either action {\it Down} or {\it Stay} (or any mixed strategy of the two).
        \end{itemize}
    \item If $y(B) = 2$ (the agent is in the lower row):
        \begin{itemize}
            \item If $d(A,B) < 0$, there is no risk of a collision since agent $A$ already passed agent $B$. Therefore, deviating and playing {\it Stay} delays $B$'s arrival at the destination.
            \item If $y(A) = 1$ and $d(A,B) = 1$, playing action {\it Up} (instead of {\it Stay}) will lead to a collision, resulting in a lower reward.
            \item If $y(A) = 1$ and $d(A,B) \geq 4$, since there is no risk of collision, taking action {\it Up} will yield the greatest reward, and any other action will cause it to reach the destination later.
            \item If $y(A) = 2$ and $d(A,B) \geq 3$, similarly, any action other than {\it Up} will cause a delay in arriving at the destination.
            \item Otherwise, agent $B$ can choose whether to take action {\it Stay} or {\it Up} because there is no risk of a collision and it will not affect the arrival time. We note that if it takes action {\it Up} and agent $A$ follows its strategy, agent $B$'s next action will be {\it Down}.
        \end{itemize}
\end{itemize}


\end{proof}

Clearly, due to the symmetry of the game, agents $A$ and $B$ may switch policies and the resulting set of strategies will be in equilibrium. 
However, since both sets of policies and equilibria are symmetrical, we cannot predetermine which equilibrium to select.
Furthermore, as we will show in the experiments, human agents, in most cases, do not follow any of the above strategies (see section \ref{sec:R}).

\section{Socially Aware Reinforcement Learning (\agent)}

To solve the single track road problem, we introduce the Socially Aware Reinforcement Learning agent (\agent). For \agent, we model the problem as an MDP, in which the human is a part of the environment. A state is composed of the current location of both agents, as well as the location of both agents in the previous timestep, which serves as a model of the velocity. In order to model human behavior, we use a data-set of humans interacting in the single track road problem, and for each state we compute the fraction of humans that were in that state and took each of the possible actions. We apply the Laplace rule of succession \cite{zabell1989rule}. Given state $s$, let $$A_s = \{a \in A: a \text{permitted from} s\}$$
where $A$ is the set of all actions.
For $a \in A_s$ let $|a_s|$ be the number of times in the dataset that action $a$ was performed from state $s$ and let $n_s = \sum_{a \in A_s} |a_s|$ be the total number of actions performed from $s$. We assume that the probability that the human will take action $a$ at state $s$ is
$$\mathds{P}(a | s) = \frac{|a| + 1}{n_s + |A_s|}.$$ 

We use value iteration, an MDP planning algorithm, which is based on dynamic programming, for solving the MDP \cite{sutton2018reinforcement}. The value iteration guarantees to find the optimal policy, under the assumption that the MDP model, which includes the human model, is accurate. Clearly, policy iteration would have yield the same policy, and model free reinforcement learning methods, such as q-learning (using a simulation), should also converge to the same policy, if allowed enough running time.
However, since our human model is inaccurate, instead of using the common approach for solving the MDP by trying to maximize the agent's outcome directly, \agent uses a linear combination of its own outcome and the human's outcome. It is important to note that \agent is still selfish, it considers the human's outcome only because this is its way to maximize its own outcome. It is interesting to note that it has been shown in the field of psychology that people who consider other people's goals and show empathy, feel better with themselves and are more likely to reach their own goals \cite{carey2021deconstructing}.


However, since our human model is based on a limited data-set size, we propose to incorporate our knowledge related to the human reward function into the optimization problem. Yet, instead of trying to use the human reward function as a part of the human model or the transition function of the MDP, we propose to add it to the objective function of the agent.
To that end, we define the parameter $\beta$, a value between $0$ and $1$, that quantifies the degree to which the agent considers its own outcome and the human's outcome. Namely, the agent, $A$, instead of optimizing towards $u(A)$, optimizes towards $\beta u(A) + (1-\beta)u(B)$. We note that when $\beta = 1$ the agent optimizes towards its own outcome. A $\beta$ value of $0.5$ denotes that the agent tries to optimize the social outcome (i.e., $0.5 u(A) + 0.5 u(B)$, which is identical to optimizing simply towards $u(A) + u(B)$), and when $\beta = 0$ the agent only considers the human's utility function. 

In general, for a two player game, in which one of the players is human, and given a data-set we propose a formula for computing the $\beta$ value as follows. Let the vectors $R_A$ and $R_B$, of length $n$, denote the final outcomes in the data-set for players A and B, respectively for each episode.
The following formula provides the proposed $\beta$ value to be used by \agent:
$$\beta = \frac{1 - correl(R_A,R_B)}{2}.$$
$correl(R_A,R_B)$ is the correlation between $R_A$ and $R_B$, which is computed by:
$$correl(X, Y) =  \frac{\sum \limits_{x_i\in X, y_i\in Y} (x_i - \bar{x})(y_i - \bar{y})}{\sqrt{\sum \limits_{x_i \in X}(x_i - \bar{x})^2\sum \limits_{y_i \in Y}(y_i - \bar{y})^2}}.$$
For example, in a zero-sum game, the correlation between the rewards of both players is $-1$; therefore, the agent will ignore the human's outcome and only maximize its own. On the other hand, when the rewards of both players are independent, the correlation will be $0$. Therefore,  $\beta$ will be $0.5$, that is, the agent will consider both rewards equally. In a game that is more cooperative the human's utility function is not as different from the agent's, and therefore the value of $\beta$ is lower.

\section{Problem Specification}
\label{sec:ED}
We use a $2 \times 6$ grid to model the single-road game problem, and the reward functions described in Section \ref{sec:gameTheory} (see Figure \ref{fig:game}). We set the discount factor, $\gamma$, to $0.999$, so that the overall return is very close to the sum of the rewards. 



We define a state as a pair $(i,j)$ in which $i$ is a position of the autonomous agent, and $j$ is a position of the human agent. We refer to this state representation as a state \emph{without} velocity.
We also use a more complex representation of a state by considering also the previous locations of both players; this representation is referred to as a state with velocity. That is, a state is a tuple of two pairs $((i,j),(l,k))$, where the first coordinate of each pair corresponds to the position on the board of the autonomous agent, and the second coordinate corresponds to the position of a human agent. The first pair, $(i,j)$, is the current state of the two agents, and the second pair, $(l,k)$, is their previous state. Accounting for both players' velocity allows the composition of a more accurate human model, which is a part of the transition function.

Since the game ends only when both players reach their destination (or collide with each other), in order to model the problem as an MDP with an objective function that also considers the human's reward function we had to slightly modify the reward function. This is because a standard MDP only considers the agent's point of view while the human is considered  a part of the environment.
Given $\beta$ and a state $s$, let $remainingSteps(s,X)$ be the number of remaining steps from state $s$ for agent $X$, where $X$ is either the autonomous agent or human player.
\begin{itemize}
    \item If the two agents collide, the reward is
    $$\beta \cdot (-100) + (1-\beta) \cdot (-100).$$
    \item If the autonomous agent has not yet reached the destination but the human agent has, the reward is
    $$\beta \cdot (30 - remainingSteps(s, autonomous)) + (1-\beta) \cdot 30.$$
    \item If the autonomous agent has reached the destination and the human agent has not, the reward is
    $$\beta \cdot 30 + (1-\beta) \cdot (30 - remainingSteps(s, human)).$$
    \item For any other step, the reward is 
    $$\beta \cdot (-1) + (1-\beta) \cdot (-1).$$
\end{itemize}

\section{Experimental Design}
In order to compose the data-set and evaluate \agent's performance, we recruited $470$ participants from Mechanical Turk \cite{paolacci2010running} to play the single road game.
The participants first read the game instructions and were then required to answer three short and simple questions, to ensure that they had read and understood the instructions. The participants then played the game only once. Upon completion (either by reaching the other side, or if colliding with the other agent), the participants provided demographic information including whether they have a valid driving license, an expired driving license or no driving license. In addition, the participants
were asked to state how much they agreed with each of the following five statements:
\begin{enumerate}
    \item The agent played aggressively.
    \item The agent played generously.
    \item The agent played wisely.
    \item The agent was predictable.
    \item I felt the agent was a computer.
\end{enumerate}
We used a seven point Likert-like scale \cite{joshi2015likert} for these statements, ranging from strongly disagree ($1$) to strongly agree ($7$).

$446$ participants completed the game and answered the survey. Their demographic information can be found in Table \ref{tbl:participants}.
\begin{table}
\centering
\caption{Participants details}
\begin{tabular}{|c|c|c|} \hline
\multirow{2}{*}{\textbf{Gender}}
& Male & 267 \\
& Female & 179 \\ \hline
\multirow{4}{*}{\textbf{Education level}}
& High school & 128\\
& BA & 265 \\
& M.Sc & 45 \\
& Ph.D & 7 \\
& None & 1 \\ \hline
\multirow{2}{*}{\textbf{Driving License}} 
& Valid & 411 \\
& Expired & 13 \\
& None & 22 \\ \hline
\textbf{Average age} &  & 38.6  \\ \hline
\end{tabular}
\label{tbl:participants}
\end{table}


We used the following 4 different baseline agents for the data gathering phase. 
\begin{enumerate}
    \item 
    {\it Careful:} an agent that adheres to the strategy of agent $B$ in Theorem \ref{thr:eq}. That is, it tries to move left, but tries to avoid colliding with the other agent as well, so if moving left may risk colliding with the other agent it stays in place. If staying in place also risks colliding with the other agent, it moves down.  
    \item
    {\it Aggressive:} an agent that adheres to the strategy of agent $A$ in Theorem \ref{thr:eq}. That is, the agent always moves left.
    \item
    {\it Semi-aggressive:} an agent that moves left unless the other agent is already there, in which case it stays in place until the other agent moves out of its way.
    \item
    {\it Random:} an agent that moves randomly.
\end{enumerate}

\section{Results}
\label{sec:R}

In this section we present a comparison of all agents mentioned above and show that \agent significantly outperforms all other agents.
In addition, we consider the following agents:
\begin{enumerate}
    \item {\it Non-Velocity VI}: runs a value iteration on the MDP without velocity using the appropriate human model.
    \item {\it Velocity VI}: runs a value iteration on the MDP with velocity using the appropriate human model.
    \item {\it Equal Social VI}: uses value iteration to maximize the sum of the agent's and the human's utilities (i.e., used a $\beta$ value of $0.5$).
\end{enumerate}
The agent's score is calculated by averaging all its scores in each game it plays. 
We begin by comparing the performance of each of the agents. 
Figure \ref{fig:AS} presents a comparison between the performance of all baseline agents, Velocity and Non-Velocity Value Iteration ($\beta = 1$), Equal Social Value Iteration ($\beta = 0.5$) and \agent ($\beta = 0.13$).
\begin{figure}[!h]
\centering
\includegraphics[width=4in]{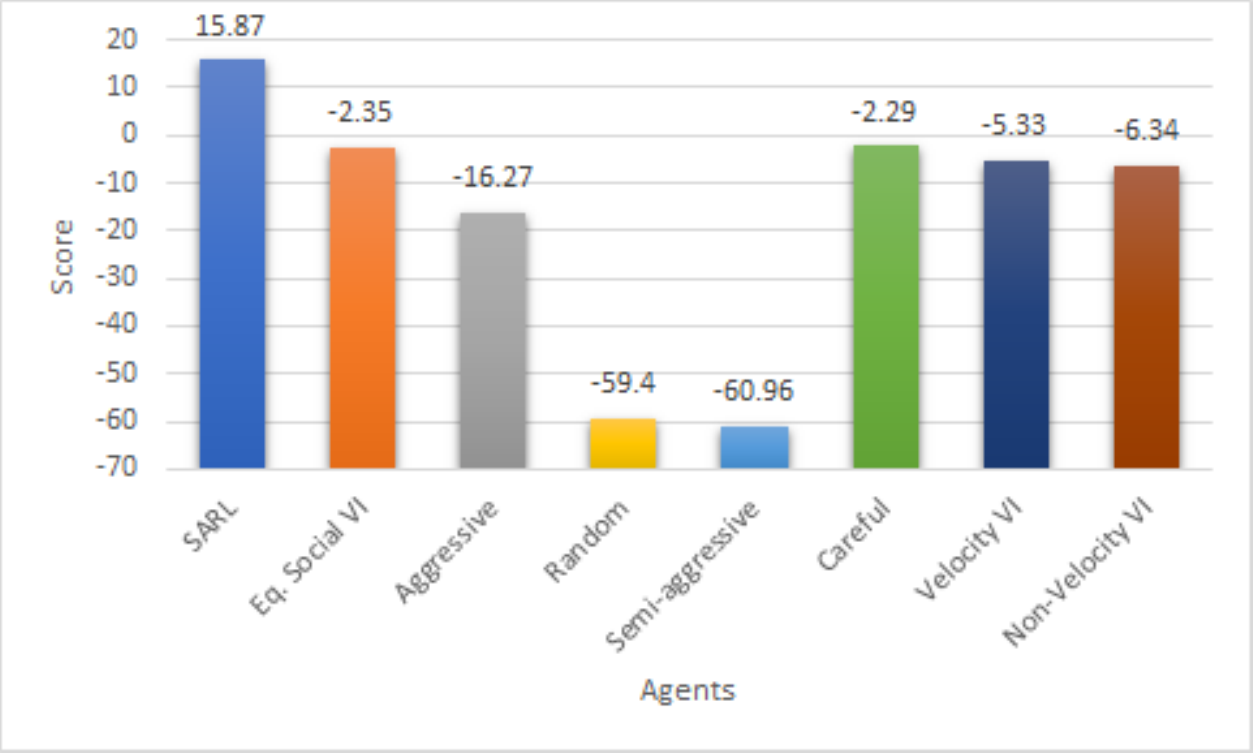}
\caption{A comparison between the performance of all baseline agents, Velocity and Non-Velocity Value Iteration ($\beta = 1$), Equal Social Value Iteration ($\beta = 0.5$) and \agent ($\beta = 0.13$).}
\label{fig:AS}
\end{figure}
As depicted by Figure \ref{fig:AS}, \agent significantly outperforms all other agents ($p < 0.01$) in terms of the agent's performance.  Furthermore, it is the only agent that achieved a positive average reward.
We also note that the agent that uses the state representation with velocity obtained slightly better results than the agent that used the non-velocity state representation, though these differences are not statistically significant.
We now turn to evaluate the human's score when playing with each of the agents. Although the agents are designed to be selfish, clearly, it is more beneficial if also the human player would result with a better score. Table \ref{tbl:scores} presents the performance of each of the agents along with the performance of the humans playing against them.

\begin{table}[!h]
\centering
\caption{A comparison between the performance of each of the agents along with the human player who played against each of them.}
\begin{tabular}{| l | p{1.6cm} | p{1.7cm} | p{1.7cm}|}
\hline
\textbf{} & \textbf{Avg. agent's score} & \textbf{Avg. human's score} & \textbf{Avg. social welfare} \\ \hline
\textbf{Careful} & -2.29 & -0.86 & -3.15\\ \hline
\textbf{Aggressive} & -16.27 & -18.40 & -34.67\\ \hline
\textbf{Semi-aggressive} & -60.97 & -62.11 & -123.08\\ \hline
\textbf{Random} & -59.40 & -57.62 & -117.02 \\ \hline
\textbf{Non-Velocity VI} & -6.34 & -9.03 & -15.37\\ \hline
\textbf{Velocity VI} & -5.33 & -6.03 & -11.36\\ \hline
\textbf{Eq. Social VI} & -2.35 & -4.09 & -6.44\\ \hline
\textbf{\agent} & \textbf{15.87} & \textbf{17.12} & \textbf{32.99} \\ \hline
\end{tabular}
\label{tbl:scores}
\end{table}
As shown in Table \ref{tbl:scores}, \agent also significantly outperforms all other agents ($p < 0.01$) in terms of the human's performance. 


In addition, we tested the performance of a velocity value iteration agent with the $\beta$ value set to $0$. That is, an agent that only considers the human reward. Interestingly, such an agent simply moves down and remains there forever. This is because this way it does not disturb the human. Unfortunately, such an agent achieves a final outcome of $-\infty$ (or $-\frac{1}{1-\gamma}$) because it can never reach its destination, since the when the human's reaches her goal, the agent is directly beneath her. 

Next, we evaluate the prediction of the {\it policy evaluation} algorithm,  using both forms of state representations (i.e., with and without velocity). 
Table \ref{tbl:predictions} presents the prediction compared with the actual score of every agent. 
As can be seen in the table, the prediction that uses a state representation with velocity outperforms the prediction that uses a state representation without velocity. However, both predictions performed badly, and imply that our human model is not accurate, as an accurate human model would have resulted with an accurate prediction. 
This demonstrates that it is not enough to rely on the dataset, and strengthens the need for the socially aware approach, which also considers the human's rewards.

\begin{table}[!h]
\centering
\caption{The accuracy of the prediction of a policy evaluation algorithm using a model with velocity and a model without velocity.}
\begin{tabular}{| l | p{1.3cm} | p{1.7cm} | p{1.7cm} | }
\hline
\textbf{} & \textbf{True score} & \textbf{Prediction with velocity (error)} & \textbf{Prediction without velocity (error)}\\ \hline
\textbf{Careful} & -2.29 & -14.41 (12.12) & -4.86 (2.57) \\ \hline
\textbf{Aggressive} & -16.27 & -6.21 (10.6) & 1.14 (17.41) \\ \hline
\textbf{Semi-aggressive} & -60.97 & -56.47 (4.5) & -47.81 (13.16) \\ \hline
\textbf{Non-Velocity VI} & -6.34 & 0.51 (6.85) & \textbf{13.63} (19.97)\\ \hline
\textbf{Velocity VI} & -5.33 & \textbf{14.47} (20.02) & N/A \\ \hline 
\textbf{Eq.Social VI} & -2.35 & 12.34 (14.69) & N/A\\ \hline
\textbf{\agent} & \textbf{15.87} & 7.55 (8.32) & N/A \\ \hline
\end{tabular}
\label{tbl:predictions}
\end{table}

We now turn to analyze the survey results for each agent (see Table \ref{tbl:survey_results}). Each value in the table is the average of all the scores of the measured values: Aggressively, Computer, Generously, Wisely and Predictable.
\begin{table*}
\centering
\caption{Survey results of all agents}
\begin{tabular}{| p{2.6cm} | p{1.4cm} | p{1.4cm} | p{1.4cm} | p{1.4cm} | p{1.4cm} | }
\hline
\textbf{} & \textbf{Aggressively} & \textbf{Computer} & \textbf{Generously} & \textbf{Wisely} & \textbf{Predictable} \\ \hline
\textbf{Careful} & 3.94 & 5.70 & 4.23 & 4.92 & 4.28 \\ \hline
\textbf{Aggressive} & 5.04 & 5.83 & 3.28 & 4.59 & 4.97\\ \hline
\textbf{Semi-aggressive} & 4.57 & 5.73 & 3.21 & 4.33 & 4.52\\ \hline
\textbf{Random} & 3.51 & 5.64 & 4.01 & 3.72 & 3.57\\ \hline
\textbf{Non-Velocity VI} & 4.88 & 6.20 & 3.27 & 4.65 & 4.82\\ \hline
\textbf{Velocity VI} & 4.82 & 6.01 & 4.20 & 4.72 & 4.76\\ \hline
\textbf{Eq. Social VI} & 4.78 & 5.60 & 3.69 & 4.92 & \textbf{4.98}\\ \hline
\textbf{\agent} & \textbf{3.30} & \textbf{5.58} & \textbf{5.14} & \textbf{5.01} & 4.00 \\ \hline
\end{tabular}
\label{tbl:survey_results}
\end{table*}
Note that the lower the `Aggressively' and `Computer' parameters, the better the performance. On the other hand, the higher the `Generously', `Wisely' and `Predictable' parameters, the better the performance.
As can be seen in Table \ref{tbl:survey_results}, \agent obtained the best results compared to the other agents among all parameters except its score on Predictable.
These results entail that \agent demonstrates a clear improvement over all the other agents.

Next, we compare the performance of the humans according to their demographic information. We did not find any significant differences between male and female players, with female participants obtaining an average of $-25.98$ and male participants an average of $-25.66$. Similarly, education level did not seem to have any impact on the performance of the participants. Most interestingly, participants with a driving license that has expired obtained a much lower average score ($-60.15$) than those with a valid driving license ($-24.77$) and those without a driving license. Although these differences appear to be statistically significant using a one-tail t-test ($p<0.05$), this result requires deeper investigation, as the number of participants whose driving license has expired is only $13$. Furthermore, an ANOVA test \cite{rogan1977anova} does not show that these differences are statistically significant.

Finally, we present the number of human participants who followed a strategy that could be in a Nash equilibrium. As can seen in Figure \ref{fig:NE}, only a small portion of the participants followed one of the two strategies that could be in equilibrium: the `Careful' strategy or the `Aggressive'  strategy. Clearly, most of the participants did not follow a strategy that could be in a Nash equilibrium.
\begin{figure}[!h]
\centering
\includegraphics[width=4in]{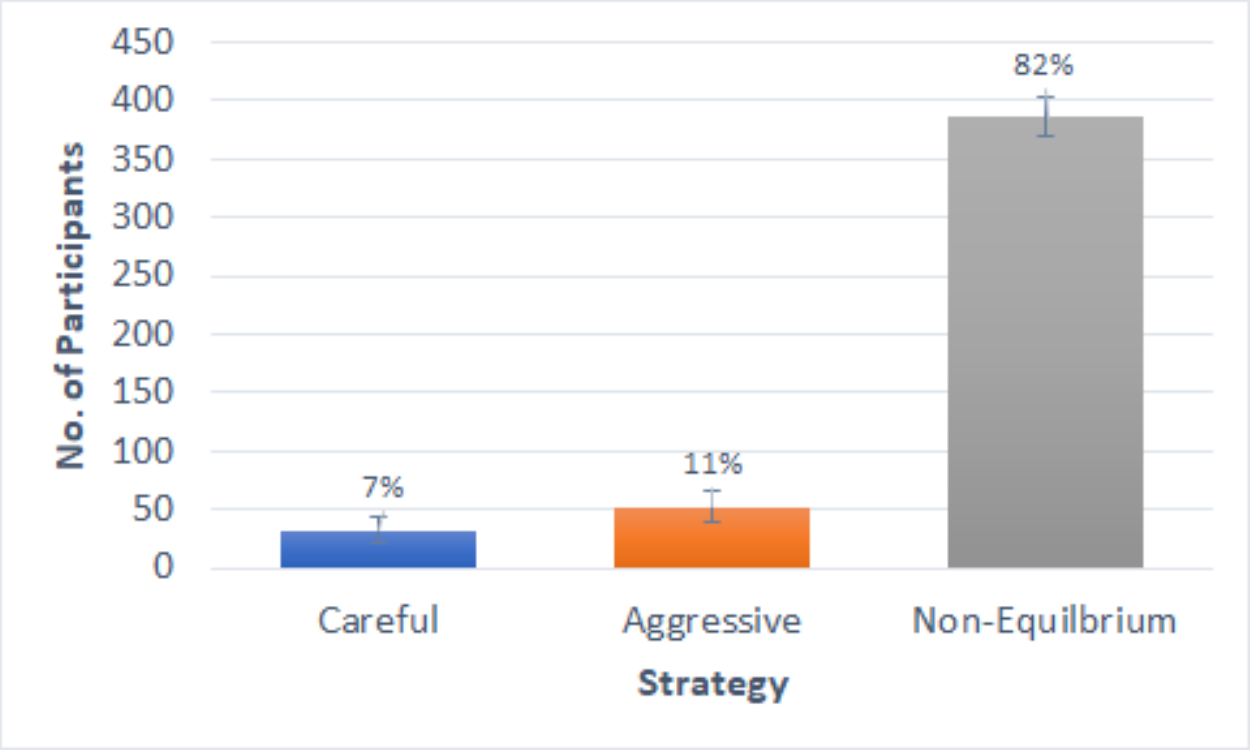}
\caption{The number and percentage of human participants who followed a strategy that could be in a Nash equilibrium as well as the number and percentage of them who did not follow any strategy in equilibrium. The error bars present the 95\% confidence interval.}
\label{fig:NE}
\end{figure}
\section{Conclusions and Future Work}

In this paper we present the single track road problem. In this problem two agents face each-other at opposite positions of a road that can only have one agent pass at a time. We focused on the scenario in which one agent is human, while the other is an autonomous agent. We ran experiments with human subjects in a simple grid domain, which simulates the single track road problem.
We showed that when data is limited,  building an accurate human model is very challenging, and that a reinforcement learning agent, which was based on this data, did not perform well in practice. However, we showed that a social agent, i.e., an agent that tried to maximize a linear combination of the human's utility and its own utility, achieved a high score, and significantly outperformed other baselines, including an agent that tried to maximize only its own utility. We provided a formula to compute what we believe to be a good choice for the $\beta$ parameter, i.e., the ratio between the human's and the agent's utility when attempting to maximize the agent's utility. In addition, we showed that the human achieved highest utility when interacting with \agent, a value that was significantly higher than when interacting with any other baseline agent. Furthermore, \agent was perceived by humans as less aggressive, more generous and wiser than all other baselines.

In future work we intend to show that \agent performs well also when considering other, possibly very different, settings. One option for such a setting is a setting with a continuous state space as well as a continuous action space.
We further intend  to utilise the idea of using a social agent approach, learned in the grid game environment, and to adapt \agent to a simulated autonomous vehicle environment with human drivers controlling simulated vehicles. Once we perform well in the simulated environment, we expect to run \agent in a real single-track-road scenario, with an autonomous vehicle and human drivers. We hope to show that \agent will perform well in the real-word environment, and that a social agent approach will be useful in practice.
Another direction for future work is to focus on situations in which the human reward function is not available apriori. Such a situation would challenge the use of \agent, as it uses the human reward function for computing its objective function.
One appealing option may be to use inverse reinforcement learning \cite{ng2000algorithms} to first learn the human's reward function, and then, to use this function to compute the optimal policy for \agent.


\section*{Acknowledgment}

This research was supported in part by the Ministry of Science, Technology \& Space, Israel.



%
\bibliographystyle{abbrv}
\bibliography{bibliography}

\begin{thebibliography}{10}

\bibitem{ArielyAnchor}
D.~Ariely, G.~Loewenstein, and D.~Prelec.
\newblock ``coherent arbitrariness'': Stable demand curves without stable
  preferences.
\newblock {\em The Quarterly Journal of Economics}, 118(1):73--106, 2003.

\bibitem{azaria2016strategic}
A.~Azaria, Y.~Gal, S.~Kraus, and C.~V. Goldman.
\newblock Strategic advice provision in repeated human-agent interactions.
\newblock {\em Autonomous Agents and Multi-Agent Systems}, 30(1):4--29, 2016.

\bibitem{azaria2015strategic}
A.~Azaria, Z.~Rabinovich, C.~V. Goldman, and S.~Kraus.
\newblock Strategic information disclosure to people with multiple
  alternatives.
\newblock {\em ACM Transactions on Intelligent Systems and Technology (TIST)},
  5(4):64, 2015.

\bibitem{azaria2012strategic}
A.~Azaria, Z.~Rabinovich, S.~Kraus, C.~Goldman, and Y.~Gal.
\newblock Strategic advice provision in repeated human-agent interactions.
\newblock In {\em Proceedings of the AAAI Conference on Artificial
  Intelligence}, volume~26, 2012.

\bibitem{azaria2011giving}
A.~Azaria, Z.~Rabinovich, S.~Kraus, and C.~V. Goldman.
\newblock Giving advice to people in path selection problems.
\newblock In {\em Workshops at the Twenty-Fifth AAAI Conference on Artificial
  Intelligence}, 2011.

\bibitem{azaria2016autonomous}
A.~Azaria, A.~Richardson, and A.~Rosenfeld.
\newblock Autonomous agents and human cultures in the trust--revenge game.
\newblock {\em Autonomous Agents and Multi-Agent Systems}, 30(3):486--505,
  2016.

\bibitem{bitan2013social}
M.~Bitan, Y.~Gal, S.~Kraus, E.~Dokow, and A.~Azaria.
\newblock Social rankings in human-computer committees.
\newblock In {\em Proceedings of the AAAI Conference on Artificial
  Intelligence}, volume~27, 2013.

\bibitem{CF}
F.~Camara, R.~Romano, G.~Markkula, R.~Madigan, N.~Merat, and C.~Fox.
\newblock Empirical game theory of pedestrian interaction for autonomous
  vehicles.
\newblock 03 2018.

\bibitem{camerer03}
C.~F. Camerer.
\newblock {\em Behavioral {G}ame {T}heory. {E}xperiments in {S}trategic
  {I}nteraction}, chapter~2, pages 43--118.
\newblock Princeton University Press, 2003.

\bibitem{carey2021deconstructing}
T.~A. Carey, S.~J. Tai, and R.~Griffiths.
\newblock {\em Deconstructing Health Inequity: A Perceptual Control Theory
  Perspective}.
\newblock Springer Nature, 2021.

\bibitem{CG}
R.~Chandra, T.~Guan, S.~Panuganti, T.~Mittal, U.~Bhattacharya, A.~Bera, and
  D.~Manocha.
\newblock Forecasting trajectory and behavior of road-agents using spectral
  clustering in graph-lstms.
\newblock {\em {IEEE} Robotics Autom. Lett.}, 5(3):4882--4890, 2020.

\bibitem{DT}
N.~Deo and M.~M. Trivedi.
\newblock Convolutional social pooling for vehicle trajectory prediction.
\newblock In {\em 2018 {IEEE} Conference on Computer Vision and Pattern
  Recognition Workshops, {CVPR} Workshops 2018, Salt Lake City, UT, USA, June
  18-22, 2018}, pages 1468--1476. {IEEE} Computer Society, 2018.

\bibitem{DC}
W.~Ding, J.~Chen, and S.~Shen.
\newblock Predicting vehicle behaviors over an extended horizon using behavior
  interaction network.
\newblock {\em CoRR}, abs/1903.00848, 2019.

\bibitem{EM}
M.~Elhenawy, A.~Elbery, A.~Hassan, and H.~Rakha.
\newblock An intersection game-theory-based traffic control algorithm in a
  connected vehicle environment.
\newblock pages 343--347, 09 2015.

\bibitem{AP}
A.~Houenou, P.~Bonnifait, V.~Cherfaoui, and W.~Yao.
\newblock Vehicle trajectory prediction based on motion model and maneuver
  recognition.
\newblock In {\em 2013 {IEEE/RSJ} International Conference on Intelligent
  Robots and Systems, Tokyo, Japan, November 3-7, 2013}, pages 4363--4369.
  {IEEE}, 2013.

\bibitem{joshi2015likert}
A.~Joshi, S.~Kale, S.~Chandel, and D.~K. Pal.
\newblock Likert scale: Explored and explained.
\newblock {\em British Journal of Applied Science \& Technology}, 7(4):396,
  2015.

\bibitem{KM}
B.~Kim, C.~M. Kang, S.~Lee, H.~Chae, J.~Kim, C.~C. Chung, and J.~W. Choi.
\newblock Probabilistic vehicle trajectory prediction over occupancy grid map
  via recurrent neural network.
\newblock {\em CoRR}, abs/1704.07049, 2017.

\bibitem{leon2021review}
F.~Leon and M.~Gavrilescu.
\newblock A review of tracking and trajectory prediction methods for autonomous
  driving.
\newblock {\em Mathematics}, 9(6):660, 2021.

\bibitem{nay2016predicting}
J.~J. Nay and Y.~Vorobeychik.
\newblock Predicting human cooperation.
\newblock {\em PloS one}, 11(5):e0155656, 2016.

\bibitem{ng2000algorithms}
A.~Y. Ng, S.~J. Russell, et~al.
\newblock Algorithms for inverse reinforcement learning.
\newblock In {\em Icml}, volume~1, page~2, 2000.

\bibitem{nguyen2013analyzing}
T.~Nguyen, R.~Yang, A.~Azaria, S.~Kraus, and M.~Tambe.
\newblock Analyzing the effectiveness of adversary modeling in security games.
\newblock In {\em Proceedings of the AAAI Conference on Artificial
  Intelligence}, volume~27, 2013.

\bibitem{paolacci2010running}
G.~Paolacci, J.~Chandler, and P.~G. Ipeirotis.
\newblock Running experiments on amazon mechanical turk.
\newblock {\em Judgment and Decision making}, 5(5):411--419, 2010.

\bibitem{rogan1977anova}
J.~C. Rogan and H.~Keselman.
\newblock Is the anova f-test robust to variance heterogeneity when sample
  sizes are equal?: An investigation via a coefficient of variation.
\newblock {\em American Educational Research Journal}, 14(4):493--498, 1977.

\bibitem{rosenfeld2015adaptive}
A.~Rosenfeld, A.~Azaria, S.~Kraus, C.~V. Goldman, and O.~Tsimhoni.
\newblock Adaptive advice in automobile climate control systems.
\newblock In {\em Workshops at the Twenty-Ninth AAAI Conference on Artificial
  Intelligence}, 2015.

\bibitem{shvartzon2016personalized}
A.~Shvartzon, A.~Azaria, S.~Kraus, C.~V. Goldman, J.~Meyer, and O.~Tsimhoni.
\newblock Personalized alert agent for optimal user performance.
\newblock In {\em Thirtieth AAAI Conference on Artificial Intelligence}, 2016.

\bibitem{sutton2018reinforcement}
R.~S. Sutton and A.~G. Barto.
\newblock {\em Reinforcement learning: An introduction}.
\newblock MIT press, 2018.

\bibitem{tversky81}
A.~Tversky and D.~Kahneman.
\newblock The framing of decisions and the psychology of choice.
\newblock {\em Science}, 211(4481):453--458, 1981.

\bibitem{wright2010beyond}
J.~R. Wright and K.~Leyton-Brown.
\newblock Beyond equilibrium: Predicting human behavior in normal-form games.
\newblock In {\em Twenty-Fourth AAAI Conference on Artificial Intelligence},
  2010.

\bibitem{wright2014level}
J.~R. Wright and K.~Leyton-Brown.
\newblock Level-0 meta-models for predicting human behavior in games.
\newblock In {\em Proceedings of the fifteenth ACM conference on Economics and
  computation}, pages 857--874, 2014.

\bibitem{zabell1989rule}
S.~L. Zabell.
\newblock The rule of succession.
\newblock {\em Erkenntnis}, 31(2):283--321, 1989.

\end{thebibliography}

\end{document}